\documentclass[a4paper,10pt,conference]{ieeeconf}
\IEEEoverridecommandlockouts
\usepackage{amsmath,amssymb,bm}

\usepackage{amsthm}
\usepackage{graphicx}
\usepackage{booktabs}
\usepackage{algorithm}
\usepackage{algorithmic}
\usepackage{cite}
\usepackage{url}

\newtheorem{proposition}{Proposition}
\theoremstyle{definition}
\newtheorem{definition}{Definition}

\newcommand{\Real}{\mathbb{R}}
\newcommand{\U}{\mathcal{U}}
\newcommand{\Lset}{\mathcal{L}}
\newcommand{\simplex}{\Delta}

\graphicspath{{assets/}}

\title{\LARGE \bf
Joint-Feasibility Margin Feedback for Bounded-Input Second-Order Control Barrier Functions
}

\author{Siyuan Yang$^{1}$ and Shaoming He$^{1,2}$%
\thanks{This work has been submitted to the IEEE for possible publication. Copyright may be transferred without notice, after which this version may no longer be accessible.}%
\thanks{$^{1}$Siyuan Yang and Shaoming He are with the School of Aerospace
Engineering, Beijing Institute of Technology, Beijing 100081, China.}%
\thanks{$^{2}$Shaoming He is the corresponding author
(e-mail: shaoming.he@bit.edu.cn).}%
}

\begin{document}
\bstctlcite{ROBIO2026BSTControl}

\maketitle
\thispagestyle{empty}
\pagestyle{empty}

\begin{abstract}
Multiple second-order control barrier function constraints can exhaust their common control authority under bounded acceleration, even when the current state remains feasible.
We define a signed local joint-feasibility margin whose sign exactly identifies whether the current barrier half-spaces intersect the admissible input set.
For unit-scaled distance constraints, it has acceleration units and a support-function dual that identifies weighted constraint conflicts.
We predict the margin one sample ahead for nine task-directed candidates, selecting either the least-deviating candidate that attains a desired margin or the candidate with the largest predicted margin before solving the same hard HOCBF quadratic program.
A controlled one-step case shows that, at the state predicted under the hard-QP command, two constraints remain individually feasible but become jointly infeasible, whereas the margin-feedback controller retains a positive next margin.
Separately, across 20 paired initial conditions, the margin-feedback controller retains the hard-QP command in a low-pressure task and acts at positive margin in a critical task before the hard-QP problem becomes infeasible; its trajectories remain feasible at that time and complete the task.
Source code is publicly available.\footnote{\url{https://github.com/xirhxq/cbf/tree/robio2026-preventive-regulation}}
\end{abstract}

\begin{keywords}
Control barrier functions, input constraints, joint feasibility, predictive safety filtering, multi-UAV systems.
\end{keywords}

\section{Introduction}
\label{sec:intro}

Control barrier functions (CBFs) turn state constraints into inequalities on the control input and are widely used as real-time safety filters \cite{Ames2017CBFQP,Ames2019CBFQP}.
For systems controlled through acceleration, collision clearance, communication range, and other distance requirements have relative degree two and can be enforced with high-order CBFs (HOCBFs) \cite{NguyenSreenathECBF,Xiao2019HOCBF,Xiao2021HOCBF}.
Each requirement is simple in isolation, but all of its HOCBF inequalities share the same bounded acceleration.
Their common admissible set can therefore shrink rapidly or disappear as relative speed and braking demand increase.

A conventional hard HOCBF quadratic program (QP) reports this conflict only after no input satisfies all inequalities.
The task controller may still be driving toward the boundary while the current QP remains feasible.
This motivates two questions: how much common control authority remains at the current state, and can that quantity modify the task command before it changes sign?

The first quantity is a signed \emph{joint-feasibility margin}: the largest common slack of each UAV's local HOCBF constraints over its admissible acceleration set.
Its sign gives an exact pointwise feasibility test, while its dual assigns a separating weight to conflicting constraints.
To act before this margin changes sign, a one-step feedback law evaluates nine commands between the task-directed input and a maximum-margin input, then selects either the least-deviating candidate that reaches a desired margin or the candidate with the largest predicted margin.
The selected command remains the nominal input of the same hard HOCBF-QP.
In the evaluation, Proposed denotes this margin-feedback controller, while Baseline denotes the unchanged hard HOCBF-QP.

The contributions are twofold.
First, we formulate a local joint-feasibility margin in acceleration units for normalized second-order distance constraints and give its exact feasibility and dual interpretations.
Second, we develop a deterministic one-step margin-feedback controller that balances task tracking and predicted common control authority without a finite-horizon optimization.

We first isolate the one-step mechanism in a controlled case with one UAV and two fixed references.
We then evaluate the method in a post-detection maritime response task with four UAVs and two fixed references.
In 20 paired initial conditions, the Proposed and Baseline controllers behave identically in a low-pressure task.
At a critical but statically reachable distance, Proposed first acts while its current margin is positive; Baseline subsequently reaches local infeasibility, whereas Proposed retains positive margin at the same time and completes reach-and-dwell.
These trajectories show preventive margin regulation in the matched task.

\section{Related Work}
\label{sec:related}

CBF-QPs combine a nominal controller with affine safety constraints and recover the nominal command whenever it is already admissible \cite{Ames2017CBFQP,Ames2019CBFQP}.
Exponential CBFs and HOCBFs extend this construction to constraints of higher relative degree \cite{NguyenSreenathECBF,Xiao2019HOCBF,Xiao2021HOCBF}.
Input-constrained variants instead construct subsets or barrier conditions for which a bounded controller exists \cite{Agrawal2021ICCBF,Breeden2021HIRDCBF,Breeden2022Robust}.
These methods provide safe-controller synthesis, but the presence of several HOCBF constraints still raises a separate compatibility question.

Several studies analyze that question directly.
Tan and Dimarogonas use a pointwise max--min compatibility score with grid refinement to verify multiple CBFs over a state domain \cite{Tan2022Compatibility}.
Isaly et al. study feasibility and continuity of controllers defined by vector-valued barrier functions \cite{Isaly2024MultipleCBFFeasibility}.
Other methods construct compatible or consolidated barrier families, or adapt HOCBF parameters to preserve feasibility \cite{Breeden2023Composition,Black2023Consolidated,Zhang2023Feasibility}.
In contrast to state-domain verification, we compute at each sample an online signed bounded-input margin for the current finite set of second-order distance constraints and use its one-step prediction to select the task command.

Predictive safety filters solve a finite-horizon problem and use a backup trajectory toward a safe terminal set \cite{Wabersich2021PredictiveSafety}.
Predictive CBFs instead encode the future safety of a propagated nominal trajectory into a barrier condition \cite{Breeden2022PredictiveCBF}.
Both address limitations of purely pointwise filtering, but with different predictive objects and guarantees.
Here a finite set of accelerations is propagated for one sampling period, and predicted joint feasibility selects the nominal command supplied to the same hard HOCBF-QP.

\section{Problem Formulation}
\label{sec:problem}

Let $\mathcal N=\{1,\ldots,N\}$ index $N$ UAVs moving in the plane,
\begin{equation}
    \dot p_i=v_i,\qquad \dot v_i=u_i,
    \label{eq:dynamics}
\end{equation}
where $p_i,v_i,u_i\in\Real^2$ are the position, velocity, and acceleration of UAV $i\in\mathcal N$.
Throughout, $\|\cdot\|_2$ denotes the Euclidean norm.
The acceleration belongs to a nonempty compact convex set
$\U_i\subset\Real^2$.
A task controller supplies a bounded nominal acceleration $u_i^{\rm task}$.
At samples $t_k=kT_s$, $k=0,1,\ldots$, with sampling period $T_s>0$, the local model uses
$\bar u_j[k]=u_j[k-1]$, the mobile neighbor's actual input held over the preceding control interval, with specified initial values $u_j[-1]$.
Let $x[k]=(p_1[k],v_1[k],\ldots,p_N[k],v_N[k])$ be the sampled mobile-UAV joint state and define $\bar u_{-i}[k]=(\bar u_j[k])_{j\in\mathcal N\setminus\{i\}}$, where $-i$ denotes the other mobile UAVs.
At a fixed sample, we write $x=x[k]$.
In pairwise expressions, $j$ may denote a mobile neighbor or a fixed reference $b_r$ at $p_{b_r}\in\Real^2$, $r\in\{0,1\}$; the latter has $v_{b_r}=\bar u_{b_r}[k]=0$.

Let $d_{ij}=\|p_i-p_j\|_2$.
The constants $d_{\min}>0$ and $d_{\max}>d_{\min}$ denote the minimum permitted separation and maximum permitted communication distance, respectively.
Collision clearance and communication range are represented by
\begin{equation}
 h_{\rm safe}^{ij}=d_{ij}-d_{\min},\qquad
 h_{\rm com}^{ij}=d_{\max}-d_{ij}.
 \label{eq:distance-barriers}
\end{equation}
For either barrier $h$, positive gains $\lambda_1,\lambda_2$ with units $\mathrm{s^{-1}}$ give the second-order condition
\begin{equation}
 \ddot h+(\lambda_1+\lambda_2)\dot h+\lambda_1\lambda_2 h\geq0.
 \label{eq:hocbf}
\end{equation}
The corresponding high-order safe set also requires
$h\geq0$ and $\dot h+\lambda_1 h\geq0$ initially.

The controller domain contains only constrained pairs with $d_{ij}>0$.
Define $n_{ij}=(p_i-p_j)/d_{ij}$ and $v_{ij}=v_i-v_j$ on this domain.
Because
\begin{equation}
 \ddot d_{ij}=n_{ij}^{\top}(u_i-u_j)
 +\frac{\|v_{ij}\|_2^2-(n_{ij}^{\top}v_{ij})^2}{d_{ij}},
 \label{eq:distance-acceleration}
\end{equation}
each HOCBF condition is affine in $u_i$ after the neighbor acceleration used for the current update is fixed.
Let the transverse kinematic term
$q_{ij}=(\|v_{ij}\|_2^2-(n_{ij}^{\top}v_{ij})^2)/d_{ij}$, measured in $\mathrm{m/s^2}$.
The collision and communication conditions respectively become
\begin{align}
 n_{ij}^{\top}u_i &\geq n_{ij}^{\top}\bar u_j[k]-q_{ij} \notag\\
 &\quad -(\lambda_1+\lambda_2)n_{ij}^{\top}v_{ij}
 -\lambda_1\lambda_2(d_{ij}-d_{\min}), \label{eq:safe-constraint}\\
 -n_{ij}^{\top}u_i &\geq -n_{ij}^{\top}\bar u_j[k]+q_{ij} \notag\\
 &\quad +(\lambda_1+\lambda_2)n_{ij}^{\top}v_{ij}
 -\lambda_1\lambda_2(d_{\max}-d_{ij}). \label{eq:comm-constraint}
\end{align}
Let $\Lset_i$ denote all current constraints of UAV $i$, written as
\begin{equation}
 c_{i\ell}(x)^{\top}u_i\geq
 \beta_{i\ell}(x;\bar u_{-i}[k]),
 \qquad \ell\in\Lset_i,
 \label{eq:local-constraint}
\end{equation}
where these constraints retain their identities while their coefficients depend on the state.
Equations~\eqref{eq:safe-constraint} and \eqref{eq:comm-constraint} give
$c_{i\ell}(x)=n_{ij}$ for collision clearance and $c_{i\ell}(x)=-n_{ij}$ for communication range, with $\|c_{i\ell}(x)\|_2=1$ and $\beta_{i\ell}(x;\bar u_{-i}[k])$ measured in $\mathrm{m/s^2}$.

Whenever the local constraints are feasible, the hard-QP command is the unique solution of
\begin{equation}
 \begin{aligned}
 u_i^0={}&\arg\min_{u\in\U_i}\frac12\|u-u_i^{\rm task}\|_2^2\\
 &\mathrm{s.t.}\quad c_{i\ell}(x)^{\top}u\geq
 \beta_{i\ell}(x;\bar u_{-i}[k]),\quad \ell\in\Lset_i.
 \end{aligned}
 \label{eq:baseline-qp}
\end{equation}
Our objective is to change the nominal supplied to this hard HOCBF-QP when a one-step forecast indicates loss of common control authority, while retaining \eqref{eq:local-constraint} as hard constraints.

\section{Joint-Feasibility Margin}
\label{sec:margin}

\begin{definition}[Local joint-feasibility margin]
For a nonempty local constraint set $\Lset_i$, define
\begin{equation}
 \gamma_i^\star(x;\bar u_{-i}[k])=
 \max_{u\in\U_i}\ \min_{\ell\in\Lset_i}
 \big(c_{i\ell}(x)^{\top}u
 -\beta_{i\ell}(x;\bar u_{-i}[k])\big).
 \label{eq:margin}
\end{equation}
\end{definition}

With unit-normal scaling, $\gamma_i^\star$ has acceleration units; its sign characterizes feasibility for the preceding neighbor accelerations used at that sample, while its magnitude depends on the chosen constraint scaling.

Let $\bm{1}\in\Real^{|\Lset_i|}$ be the all-ones vector,
$\simplex_i=\{\mu\in\Real^{|\Lset_i|}:\mu\geq0,\bm{1}^{\top}\mu=1\}$,
and $\sigma_{\U_i}(z)=\max_{u\in\U_i}z^{\top}u$ for $z\in\Real^2$ be the support function of the input set.
All sums over $\ell$ below range over $\Lset_i$.

\begin{proposition}[Feasibility and dual interpretation]
The local constraints \eqref{eq:local-constraint} admit an input in $\U_i$ if and only if $\gamma_i^\star\geq0$.
Moreover,
\begin{equation}
 \gamma_i^\star=
 \min_{\mu\in\simplex_i}
 \left[
 \sigma_{\U_i}\!\left(\sum_{\ell}\mu_\ell c_{i\ell}(x)\right)
 -\sum_{\ell}\mu_\ell\beta_{i\ell}(x;\bar u_{-i}[k])
 \right].
 \label{eq:margin-dual}
\end{equation}
Thus $\gamma_i^\star<0$ exactly when a convex combination of the constraint requirements exceeds the maximum support available over $\U_i$.
\end{proposition}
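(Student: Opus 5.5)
The plan is to treat \eqref{eq:margin} as a finite max--min problem over the compact convex set $\U_i$ and apply a minimax theorem. The feasibility claim will be handled first, by a direct argument that needs no duality.

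\emph{Feasibility.} Each map $u\mapsto c_{i\ell}(x)^{\top}u-\beta_{i\ell}$ is affine, so $g(u)=\min_{\ell\in\Lset_i}(c_{i\ell}^{\top}u-\beta_{i\ell})$ is a finite minimum of continuous functions. Hence $g$ is continuous (indeed concave). Since $\U_i$ is nonempty and compact, the maximum in \eqref{eq:margin} is attained at some $u^\star\in\U_i$.
\begin{itemize}
\item If $\gamma_i^\star\ge0$, then $g(u^\star)\ge0$, which says $c_{i\ell}^{\top}u^\star\ge\beta_{i\ell}$ for every $\ell$. Thus $u^\star$ is a feasible input.
\item Conversely, if some $u\in\U_i$ satisfies every constraint in \eqref{eq:local-constraint}, then $g(u)\ge0$, so $\gamma_i^\star\ge g(u)\ge0$.
\end{itemize}

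\emph{Dual formula.} First rewrite the inner minimum over the finite index set as a minimum over the simplex:
\[
g(u)=\min_{\mu\in\simplex_i}\sum_\ell\mu_\ell\big(c_{i\ell}^{\top}u-\beta_{i\ell}\big).
\]
This holds because a linear function of $\mu$ on $\simplex_i$ attains its minimum at a vertex, and the vertices are the unit vectors. The resulting payoff $\Phi(u,\mu)=\big(\sum_\ell\mu_\ell c_{i\ell}\big)^{\top}u-\sum_\ell\mu_\ell\beta_{i\ell}$ is bilinear, hence concave in $u$ and convex in $\mu$. Both $\U_i$ and $\simplex_i$ are nonempty, compact, and convex. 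Von Neumann's (or Sion's) minimax theorem therefore gives
\[
\max_{u\in\U_i}\min_{\mu\in\simplex_i}\Phi=\min_{\mu\in\simplex_i}\max_{u\in\U_i}\Phi.
\]
For fixed $\mu$, the inner maximum is exactly $\sigma_{\U_i}\big(\sum_\ell\mu_\ell c_{i\ell}\big)-\sum_\ell\mu_\ell\beta_{i\ell}$, which yields \eqref{eq:margin-dual}. The outer minimum is attained because $\sigma_{\U_i}$ is a finite convex function on $\Real^2$, hence continuous, and $\simplex_i$ is compact.

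\emph{Final interpretation.} By \eqref{eq:margin-dual} and the compactness of $\simplex_i$, we have $\gamma_i^\star<0$ if and only if some $\mu\in\simplex_i$ satisfies
\[
\sum_\ell\mu_\ell\beta_{i\ell}>\sigma_{\U_i}\Big(\sum_\ell\mu_\ell c_{i\ell}\Big).
\]
In words, the convex combination of the requirements exceeds the maximal support that $\U_i$ offers in the combined direction.

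The only step with real content is the exchange of max and min. It is justified here because both sets are compact and convex and the payoff is bilinear, so no Slater-type condition is needed. If one prefers to avoid citing a minimax theorem, an alternative is to write \eqref{eq:margin} as the linear-objective convex program $\max\{\gamma: u\in\U_i,\ c_{i\ell}^{\top}u-\beta_{i\ell}\ge\gamma\ \forall\ell\}$. This program is strictly feasible for any $\gamma$ below the optimum, so Lagrangian duality over the affine constraints gives the same formula. The normalization $\bm 1^{\top}\mu=1$ then emerges as the stationarity condition in $\gamma$.
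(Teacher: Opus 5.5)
Your proposal is correct and follows the paper's proof: a direct argument for the feasibility equivalence, then rewriting the inner minimum over the simplex, exchanging max and min via the minimax theorem for the bilinear payoff on the compact convex sets $\U_i$ and $\simplex_i$, and recognizing the inner maximum as the support function. Beyond the paper's sketch, you also justify that the maximum over $\U_i$ and the minimum over $\simplex_i$ are attained, and your Lagrangian-duality alternative is a valid side remark.
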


\begin{proof}
If \eqref{eq:local-constraint} is feasible, its feasible input makes the inner minimum in \eqref{eq:margin} nonnegative.
Conversely, an optimizer with $\gamma_i^\star\geq0$ satisfies every constraint.
Writing the inner minimum as a minimization over $\simplex_i$ gives a bilinear objective on compact convex sets.
The minimax theorem exchanges the two optimizations, and maximizing over $u\in\U_i$ yields the support function in \eqref{eq:margin-dual}.
\end{proof}

For the componentwise box $\U_i=[-a_{\max},a_{\max}]^2$ used in our simulations, where $a_{\max}>0$ is the acceleration bound,
$\sigma_{\U_i}(z)=a_{\max}\|z\|_1$, with $\|\cdot\|_1$ denoting the one-norm.
The minimizing $\mu$ identifies the limiting weighted constraint combination without reference to the quadratic cost in \eqref{eq:baseline-qp}.

\section{One-Step Margin Feedback}
\label{sec:feedback}

The margin-feedback controller uses the feasible task-directed command $u_i^0$ from \eqref{eq:baseline-qp} and a maximum-margin endpoint
\begin{equation}
 u_i^{\max}=\min_{\prec_{\rm lex}}
 \arg\max_{u\in\U_i}
 \min_{\ell\in\Lset_i}
 \big(c_{i\ell}(x)^{\top}u
 -\beta_{i\ell}(x;\bar u_{-i}[k])\big),
 \label{eq:max-margin-input}
\end{equation}
where $\prec_{\rm lex}$ is a fixed lexicographic order on acceleration components.
It evaluates the nine uniformly interpolated inputs
\begin{equation}
 \begin{aligned}
 u_{i,m}&=(1-\alpha_m)u_i^0+\alpha_m u_i^{\max},\\
 \alpha_m&=m/8,\qquad m=0,\ldots,8.
 \end{aligned}
 \label{eq:candidates}
\end{equation}
When $\gamma_i^\star\geq0$, convexity makes every candidate feasible for the current local constraints.

For candidate $m$ of UAV $i$, define the synchronous task-reference surrogate by
$\tilde u_i=u_{i,m}$ and $\tilde u_j=u_j^0$ for $j\neq i$.
Let the superscript ${}^+$ denote a one-step prediction at $t_{k+1}$.
All mobile states are propagated with zero-order hold,
\begin{equation}
 p_j^+=p_j+T_s v_j+\tfrac12T_s^2\tilde u_j,
 \qquad v_j^+=v_j+T_s\tilde u_j.
 \label{eq:zoh-prediction}
\end{equation}
Fixed references remain stationary, $p_{b_r}^+=p_{b_r}$ and
$v_{b_r}=\bar u_{b_r}[k]=\tilde u_{b_r}=0$.
The state-dependent coefficients are then rebuilt for all corresponding constraints using $\tilde u_j$ as the neighbor acceleration.
Writing $\tilde u_{-i}=\{\tilde u_j:j\in\mathcal N\setminus\{i\}\}$ and
$x_{i,m}^+=(p_1^+,v_1^+,\ldots,p_N^+,v_N^+)$ for the resulting predicted mobile-UAV joint state, the score is
$g_{i,m}=\gamma_i^\star(x_{i,m}^+;\tilde u_{-i})$,
where the other UAVs use their task-directed safe commands $u_j^0$ in this surrogate.
The associated task-deviation score is
$J_{i,m}=\|u_{i,m}-u_i^0\|_2$.
Both $g_{i,m}$ and $J_{i,m}$ have units of $\mathrm{m/s^2}$.

The threshold $\tau>0$, measured in $\mathrm{m/s^2}$, ranks predicted candidates; zero remains the hard-feasibility boundary.
Write $u_{i,m}=(u_{i,m,x},u_{i,m,y})$.
Let $\mathcal M_i=\{0,\ldots,8\}$ and
$\mathcal M_i^\tau=\{m\in\mathcal M_i:g_{i,m}\geq\tau\}$.
If $\mathcal M_i^\tau\neq\emptyset$, the controller minimizes
$(J_{i,m},-g_{i,m},u_{i,m,x},u_{i,m,y},m)$ lexicographically over $\mathcal M_i^\tau$.
Otherwise it minimizes
$(-g_{i,m},J_{i,m},u_{i,m,x},u_{i,m,y},m)$ over $\mathcal M_i$.
The first case is the least-deviation branch; the second is the maximum-margin fallback.
Let $m_i^\star$ denote the minimizing index under the applicable rule and
$u_i^{\rm sel}=u_{i,m_i^\star}$.
This selected candidate replaces $u_i^{\rm task}$ in the hard HOCBF-QP \eqref{eq:baseline-qp}.

Algorithm~\ref{alg:feedback} summarizes the controller.

\begin{algorithm}[t]
\caption{One-step joint-feasibility-margin feedback}
\label{alg:feedback}
\footnotesize
\begin{algorithmic}[1]
\REQUIRE Current states, $\{\bar u_j[k]\}$, all current constraints, task inputs, $T_s$, and $\tau$
\ENSURE Hard HOCBF-QP commands $\{u_i\}$, or \textsc{Failure}
\FOR{each UAV $i=1,\ldots,N$}
  \STATE Solve \eqref{eq:margin} and choose $u_i^{\max}$ by \eqref{eq:max-margin-input}
  \IF{$\gamma_i^\star<0$}
    \STATE \textbf{return} \textsc{Failure} \COMMENT{no input satisfies all current constraints}
  \ENDIF
  \STATE Compute the unique task-directed hard HOCBF-QP command $u_i^0$
\ENDFOR
\FOR{each UAV $i=1,\ldots,N$}
  \FOR{$m=0,\ldots,8$}
    \STATE Form $u_{i,m}$ by \eqref{eq:candidates}
    \STATE Propagate one step with $u_{i,m}$ and $\{u_j^0\}_{j\neq i}$
    \STATE Evaluate predicted margin $g_{i,m}$ and deviation $J_{i,m}$
  \ENDFOR
  \IF{$\{m:g_{i,m}\geq\tau\}\neq\emptyset$}
    \STATE Select the least-deviation candidate using the stated tie order
  \ELSE
    \STATE Select the maximum-margin candidate using the stated tie order
  \ENDIF
  \STATE Solve the current hard HOCBF-QP centered at $u_i^{\rm sel}$
\ENDFOR
\STATE Apply all UAV commands simultaneously
\end{algorithmic}
\end{algorithm}

This is a one-step selection rule; the hard HOCBF-QP checks feasibility again at the next sample.

\section{Numerical Evaluation}
\label{sec:evaluation}

\subsection{Controlled One-Step Case}

We first isolate the one-step mechanism using one UAV and two fixed-reference communication constraints under the exact ZOH model.
For this single-UAV case, we omit the UAV subscript.
The setup uses $T_s=0.5$ s, input box $[-2,2]^2\,\mathrm{m/s^2}$, $p_{b_0}=(0,-100)$ m and $p_{b_1}=(0,100)$ m, $d_{\max}=500$ m, $\lambda_1=\lambda_2=1\,\mathrm{s^{-1}}$, $\Delta y\in\{-0.50,-0.25,0,0.25,0.50\}$ m, $p[0]=(\sqrt{450^2-100^2},\Delta y)$ m, $v[0]=(18,0)$ m/s at low pressure and $(20.45,0)$ m/s at critical pressure, and $u^{\rm task}=(2,0)\,\mathrm{m/s^2}$.
The controlled case uses $\tau=1\,\mathrm{m/s^2}$, which lies between the central $m=2$ and $m=3$ candidate margins of $0.912$ and $1.462\,\mathrm{m/s^2}$; the multi-UAV case uses $\tau=14\,\mathrm{m/s^2}$, whose absolute value is not directly comparable because the state-dependent constraint offsets differ.
At low pressure, the margin-feedback controller selects $m=0$ and therefore returns the hard-QP command unchanged, with next margin $5.804\,\mathrm{m/s^2}$.
At critical pressure, the current joint margin is $12.027\,\mathrm{m/s^2}$, but the hard-QP command produces a nominal-next joint margin of $-0.189\,\mathrm{m/s^2}$.
At that predicted-next state, both singleton margins remain positive ($0.148$--$0.343\,\mathrm{m/s^2}$), and both dual weights are strictly positive ($0.498$--$0.502$), so both constraints participate in the same joint-margin certificate.
The margin-feedback controller selects $m=3$, or $\alpha=3/8$; all five predicted and actual next margins round to $1.462\,\mathrm{m/s^2}$ and exceed $\tau$.
Figure~\ref{fig:controlled-mechanism} shows how this preventive command restores positive predicted joint margin before the current constraints become infeasible.

\begin{figure*}[t]
\centering
\includegraphics[width=0.98\textwidth]{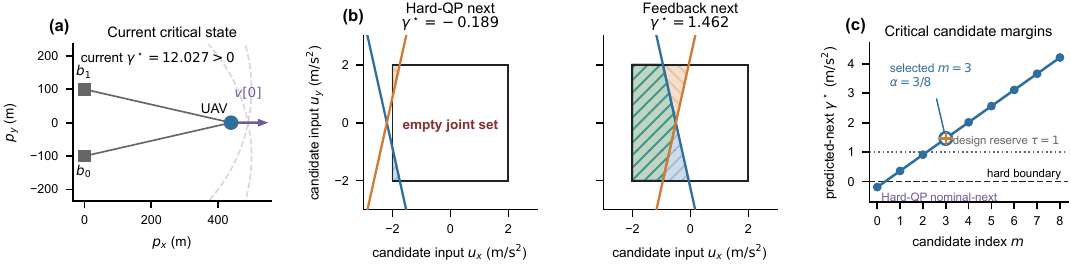}
\caption{Controlled one-step joint-constraint case. (a) The current critical state is jointly feasible. (b) The states predicted under the hard-QP and margin-feedback commands have different next-step feasible sets. (c) Median critical candidate margins across five fixed offsets; the five curves overlap at this scale.}
\label{fig:controlled-mechanism}
\end{figure*}

\subsection{Multi-UAV Task and Setup}

We consider a post-detection response phase in which the horizontal position of a stationary vessel is known.
Two fixed references $b_0$ and $b_1$ support four mobile UAVs.
We adapt the static reference construction in \cite{Yang2024MaritimeSearch} to the response task.
The assigned communication edges are
\begin{multline}
 \{\mathrm{UAV1}\!\to b_0,\mathrm{UAV1}\!\to b_1,
 \mathrm{UAV2}\!\to b_0,\\
 \mathrm{UAV2}\!\to\mathrm{UAV1},
 \mathrm{UAV3}\!\to\mathrm{UAV1},
 \mathrm{UAV3}\!\to\mathrm{UAV2},\\
 \mathrm{UAV4}\!\to\mathrm{UAV2},
 \mathrm{UAV4}\!\to\mathrm{UAV3}\}.
 \label{eq:edge-set}
\end{multline}
The resulting single triangular ladder remains fixed throughout the task.
For each UAV, $\Lset_i$ contains its three mobile-UAV collision constraints and the assigned outgoing communication constraints in \eqref{eq:edge-set}; these identities do not switch, although their coefficients change with the state.
UAV4 completes the task after remaining within 50 m of the vessel position for 30 continuous seconds; leaving the neighborhood resets the dwell timer.

The physical communication limit is 850 m and the minimum UAV separation is 10 m.
The acceleration satisfies
$|u_{i,x}|,|u_{i,y}|\leq4\,\mathrm{m/s^2}$.
Let $r_i^{\rm task}\in\Real^2$ be UAV $i$'s fixed assigned goal and
$e_i=(r_i^{\rm task}-p_i)/\|r_i^{\rm task}-p_i\|_2$ whenever $p_i\neq r_i^{\rm task}$.
The common task controller uses
$u_i^{\rm task}=\operatorname{sat}_{4\,\mathrm{m/s^2}}(((18\,\mathrm{m/s})e_i-v_i)/T_s)$ and returns zero at the goal,
where $\operatorname{sat}_{a}(z)=z\min\{1,a/\|z\|_2\}$ is Euclidean saturation with limit $a$ and $\operatorname{sat}_{a}(0)=0$.
We use $\lambda_1=\lambda_2=1\,\mathrm{s^{-1}}$ and $T_s=0.5$ s.
The desired predicted margin is $\tau=14\,\mathrm{m/s^2}$.

The fixed-reference positions are $p_{b_0}=(250,1000)$ m and $p_{b_1}=(250,1510)$ m in an $1800\times2000$ m world.
Let $D$ denote the vessel's offshore horizontal offset in meters.
With $e_{\rm off}=(1,0)$, $e_y=(0,1)$, and $p_{\rm mid}=(p_{b_0}+p_{b_1})/2$, the fixed task references are
\begin{align*}
r_1^{\rm task}&=p_{\rm mid}+\tfrac14D e_{\rm off}
                  +\tfrac{\sqrt3}{4}D e_y,\\
r_2^{\rm task}&=p_{\rm mid}+\tfrac12D e_{\rm off},\\
r_3^{\rm task}&=p_{\rm mid}+\tfrac34D e_{\rm off}
                  +\tfrac{\sqrt3}{4}D e_y,\\
r_4^{\rm task}&=p_{\rm vessel}=p_{\rm mid}+D e_{\rm off}.
\end{align*}
The critical vessel offset is $D_C=1218.862$ m, for which the longest terminal assigned edge is 840 m and therefore retains 10 m relative to the physical communication limit.
The low-pressure offset is $D_L=0.9D_C=1096.976$ m.
Both terminal geometries satisfy the static distance limits; the critical case increases relative speed and second-order braking demand during approach.

Table~\ref{tab:nominal-states} lists the nominal initial states.
\begin{table}[t]
\centering
\caption{Nominal initial states; $v_i[0]=(0,0)\,\mathrm{m/s}$.}
\label{tab:nominal-states}
\begin{tabular}{@{}ccc@{}}
\toprule
UAV & $p_{i,x}[0]$ (m) & $p_{i,y}[0]$ (m) \\
\midrule
UAV1 & 685.460911 & 1244.880280 \\
UAV2 & 681.101518 & 734.898912 \\
UAV3 & 1124.938034 & 986.114251 \\
UAV4 & 1120.578641 & 476.132883 \\
\bottomrule
\end{tabular}
\end{table}

Proposed and Baseline share the task reference, task controller, local barrier constraints, input bounds, HOCBF gains, sample time, and hard HOCBF-QP.
Baseline applies $u_i^0$ from \eqref{eq:baseline-qp}; Proposed applies Algorithm~\ref{alg:feedback} before the same hard HOCBF-QP.
We evaluate 20 matched initial conditions obtained by componentwise perturbations
$\Delta p_i\in[-2,2]^2$ m and $\Delta v_i\in[-0.2,0.2]^2$ m/s of each nominal UAV state.
Each pair uses the same initial state and task parameters for both controllers.
We use 20 fixed deterministic perturbations within these bounds, indexed by seeds 20262701--20262720 and provided with the public code.
The initial held accelerations are $u_j[-1]=0$.
Every perturbed initial state satisfies $h\geq0$ and $\dot h+\lambda_1h\geq0$ for all prescribed constraints.
We use Gurobi 12.0 with $10^{-9}$ feasibility and optimality tolerances.
We report hard infeasibility only when the QP status and reconstructed margin agree.

Let $m_{i,k}\in\{0,\ldots,8\}$ be UAV $i$'s selected candidate index at sample $k$, let $\alpha_{i,k}=m_{i,k}/8$, and let $u_{i,k}^{\rm sel}$ and $u_{i,k}^0$ be the selected nominal candidate and Baseline command at that sample.
We count one intervention whenever $\alpha_{i,k}>0$.
The associated task deviation compares these time-indexed commands at the same state and preceding neighbor accelerations:
\begin{equation}
 T_s\sum_{k,i:\,\alpha_{i,k}>0}
 \|u_{i,k}^{\rm sel}-u_{i,k}^0\|_2.
 \label{eq:task-deviation}
\end{equation}
\subsection{Multi-UAV Results}

Figures~\ref{fig:spatial} and~\ref{fig:paired-mechanism} show the representative event sequence and paired critical-task mechanism, while Table~\ref{tab:results} summarizes the task outcomes.

\begin{figure*}[!t]
\centering
\includegraphics[width=0.86\textwidth]{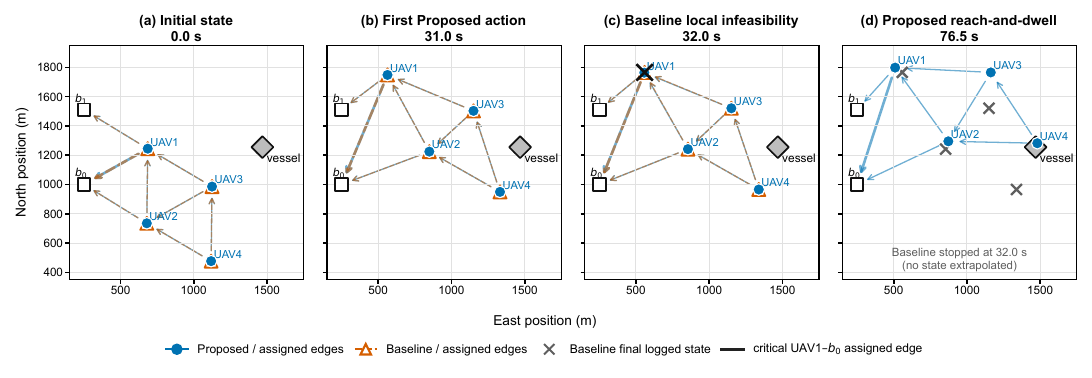}
\caption{Representative paired trajectory at four events. The final panel retains only Baseline's last available state.}
\label{fig:spatial}
\end{figure*}

\begin{figure*}[!t]
\centering
\includegraphics[width=0.84\textwidth]{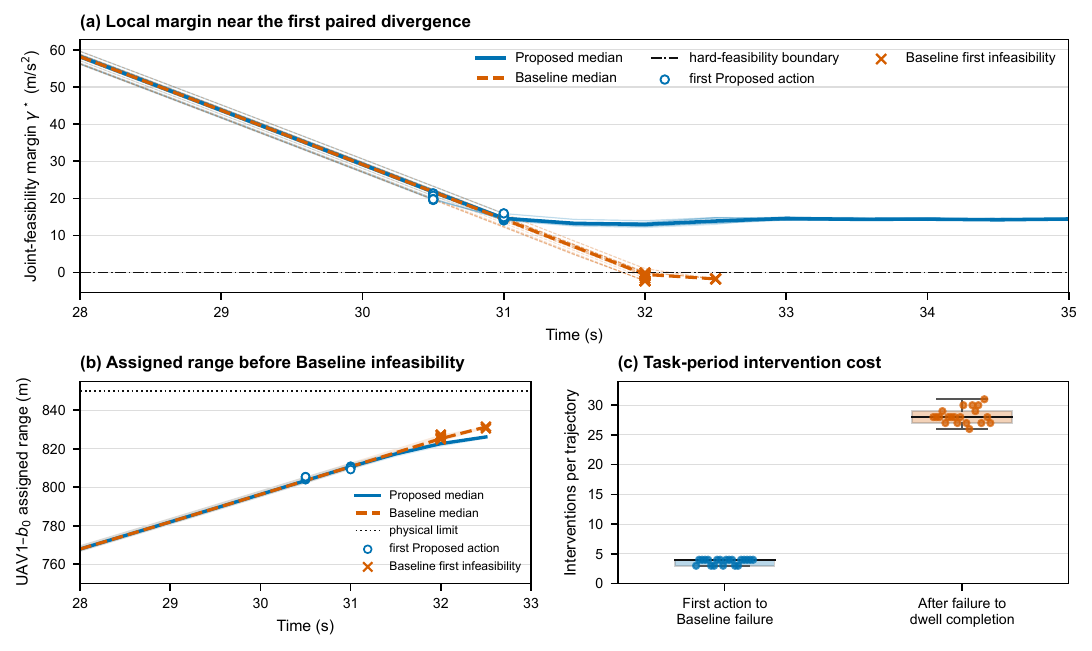}
\caption{Critical-task results over 20 pairs: median UAV1 local margin, assigned UAV1--$b_0$ range, and Proposed interventions before dwell completion.}
\label{fig:paired-mechanism}
\end{figure*}

Both metrics end at Proposed's first dwell completion; over that interval, the reported range and separation are extrema of the assigned communication edges and pairwise UAV distances across the 20 Proposed trajectories.

At $D_L$, both controllers complete all 20 tasks in 74.0--74.5 s, and Proposed never changes the Baseline command.
At $D_C$, Proposed completes all 20 tasks at 76.5 s, whereas every paired Baseline reaches local HOCBF-QP infeasibility before completion.

\begin{table}[t]
\centering
\caption{Results over 20 paired initial conditions.}
\label{tab:results}
\begin{tabular}{@{}lcc@{}}
\toprule
Metric & Low task & Critical task \\
\midrule
Proposed completion & 20/20 & 20/20 \\
Baseline completion & 20/20 & 0/20 \\
Proposed completion time (s) & 74.0--74.5 & 76.5 \\
Proposed interventions & 0 & 30--35 \\
Task deviation (m/s) & 0 & 40.94--45.08 \\
Maximum assigned range (m) & 822.63 & 840.06 \\
Minimum UAV separation (m) & 385.23 & 399.83 \\
\bottomrule
\end{tabular}
\end{table}

In the critical task, Proposed first changes UAV1's command at 30.5 s in eight pairs and 31.0 s in twelve.
UAV1's current margin at that instant is 14.072--21.339 $\mathrm{m/s^2}$.
Baseline first becomes locally infeasible at 32.0 s in 14 pairs and 32.5 s in six.
Its reconstructed UAV1 margin is $-2.212$ to $-0.013\,\mathrm{m/s^2}$, while paired Proposed has 12.128--14.913 $\mathrm{m/s^2}$ at the same time.
Ten first actions use the least-deviation branch; the other ten use maximum-margin fallback.
Across all 638 interventions, including the 74 preceding the paired Baseline failure, the one-step margin error is at most $1.14\times10^{-6}\,\mathrm{m/s^2}$ and never changes its classification relative to either $0$ or $\tau$.

Figure~\ref{fig:paired-mechanism}(a) shows the median UAV1 local margin across the 20 pairs and the individual event markers; Proposed acts before the paired Baseline margin becomes negative.
At Baseline's first infeasibility, its UAV1--$b_0$ assigned range is 824.793--831.503 m, below the 850 m physical limit; thus the local HOCBF conflict appears before a communication-range violation.
Proposed limits the growth of that edge and remains within 840.060 m over the completed task, as shown in Fig.~\ref{fig:paired-mechanism}(b).
This edge illustrates the dominant conflict in these trajectories but does not by itself define the multi-constraint margin.

Proposed uses 30--35 task-period interventions per critical trajectory.
Three or four occur between its first action and the paired Baseline infeasibility; 26--31 occur afterward but before dwell completion.
The corresponding task deviation \eqref{eq:task-deviation} is 40.942--45.082 m/s per trajectory.
The first-action window supports the initial paired divergence, whereas the later actions quantify the cost of continued regulation and task completion.
Across all completed Proposed trajectories, the physical communication and separation limits and the sampled hard HOCBF-QP constraints remain satisfied.

The evaluation covers 20 deterministic pairs with planar dynamics, a known stationary target, and fixed references.
The study does not vary $D_C$ or $\tau$ beyond their specified values; disturbances and real-sea operation remain for future evaluation.

\section{Conclusion}
\label{sec:conclusion}

This paper introduced a local joint-feasibility margin for bounded-input second-order barrier constraints and used its one-step prediction to modify a task command before an unchanged hard HOCBF-QP is solved.
A controlled one-UAV case isolated the predicted joint conflict and showed that the margin-feedback controller retained positive next-step common control authority.
In separate matched multi-UAV simulations, Proposed retains nominal behavior at low pressure and acts at positive margin in the critical task before Baseline reaches local infeasibility, then remains feasible at the same time and completes reach-and-dwell.
Together, these results support the margin-feedback controller as a one-step preventive layer for bounded-input second-order barrier controllers whose common control authority can shrink before current infeasibility.

\section*{ACKNOWLEDGMENT}
Generative AI tools assisted with manuscript restructuring and language editing; the authors verified the scientific content and conclusions.

\bibliographystyle{IEEEtran}
\bibliography{references}

\end{document}